\newtheorem{proposition}{Proposition}
\title{Upper Bounding Barlow Twins: A Novel Filter for Multi-Relational Clustering}
\author{
    Xiaowei Qian\textsuperscript{\rm 1}\equalcontrib,
    Bingheng Li\textsuperscript{\rm 1}\equalcontrib,
    Zhao Kang\textsuperscript{\rm 1}\footnote{Corresponding author}
}
\begin{document}

\maketitle

\begin{abstract}
Multi-relational clustering is a challenging task due to the fact that diverse semantic information conveyed in multi-layer graphs is difficult to extract and fuse. Recent methods integrate topology structure and node attribute information through graph filtering. However, they often use a low-pass filter without fully considering the correlation among multiple graphs. To overcome this drawback, we propose to learn a graph filter motivated by the theoretical analysis of Barlow Twins. We find that input with a negative semi-definite inner product provides a lower bound for Barlow Twins loss, which prevents it from reaching a better solution. We thus learn a filter that yields an upper bound for Barlow Twins. Afterward, we design a simple clustering architecture and demonstrate its state-of-the-art performance on four benchmark datasets. The source code is available at \textit{https://github.com/XweiQ/BTGF.}
\end{abstract}

\section{Introduction}
The advancements in information technology have led to a substantial proliferation of complex data, e.g., non-Euclidean graphs and multi-view data. Data originating from a variety of sources, each of which exhibits different characteristics, are often referred to as multi-view data. As a special type of multi-view data, multi-relational graphs contain two or more relations over a vertex set \cite{qu2017attention}. For instance, in the case of social networks, users and their profiles are considered as nodes and attributes, where each user interacts with others through multiple types of relationships such as friendship, colleague, and co-following.

Clustering is a practical technique to handle rich multi-relational graphs by finding a unique cluster pattern of nodes. One principle underlying multi-relational clustering is to leverage consistency and complementarity among multiple views to achieve good performance. For example, SwMC \cite{nie2017self} learns a shared graph from multiple graphs by using a weighting strategy; O2MAC \cite{fan2020one2multi} extracts shared representations across multiple views from the most informative graph; MCGC \cite{pan2021multi} utilizes a set of adaptive weights to learn a high-quality graph from the original multi-relational graphs. A key component of these methods is graph filtering, which fuses the topology structure and attribute information. They show that impressive performance can be achieved even without using neural networks \cite{lin2023multi, pan2023high}. This provides a smart way for traditional machine learning methods to benefit from representation learning techniques. Nevertheless, they simply use a low-pass filter without fully considering the correlation between different views. Moreover, these filters are empirically designed and fixed, which is not flexible to suit different data.

How to explore the correlation among multiple graphs is a critical problem in multi-view learning. Lyu \textit{et al} \cite{lyu2022understanding} theoretically illustrate that the correlation-based objective functions are effective in extracting shared and private information in multi-view data under some assumptions. Among them, Barlow Twins \cite{zbontar2021barlow} is particularly popular. It consists of two parts: the invariance term maximizes the correlation between the same feature across different views, while the redundancy term decorrelates different features across various views. The feature decorrelation operation not only exploits the correlation of multiple views but also effectively alleviates the problem of representation collapse in self-supervised learning. This idea has been applied to graph clustering, such as MVGC \cite{xia2022multi} and MGDCR \cite{mo2023multiplex}. However, existing methods simply use Barlow Twins, without any special operations catering to multi-relational graphs. Consequently, they still suffer from collapse. To show this, we visualize the feature distributions of several representative methods in ACM data: contrastive learning-based method MGCCN \cite{liu2022deep}, Barlow Twins-based method MGDCR \cite{mo2023multiplex}, and our proposed method \textbf{B}arlow \textbf{T}wins \textbf{G}uided \textbf{F}ilter (BTGF). Comparing Figs. \ref{collapse:MGCCN} and \ref{collapse:MGDCR}, we can observe the advantage of Barlow Twins. From Figs. \ref{collapse:MGDCR} and \ref{collapse:BTFG}, a more evident enhancement in BTGF can be found. 

In this work, we reveal that an input with a negative semi-definite inner product will lead to a lower bound for Barlow Twins loss, while an input with a positive semi-definite inner product has an upper bound. To minimize Barlow Twins loss as much as possible, we employ a graph filter to make the inner product positive semi-definite. Therefore, our filter upper bounds Barlow Twins, which means that the loss will never be too large to dominate other terms. 

\begin{figure*}[htb]
\centering
\subfigure[MGCCN]{
\includegraphics[width=5.5cm]{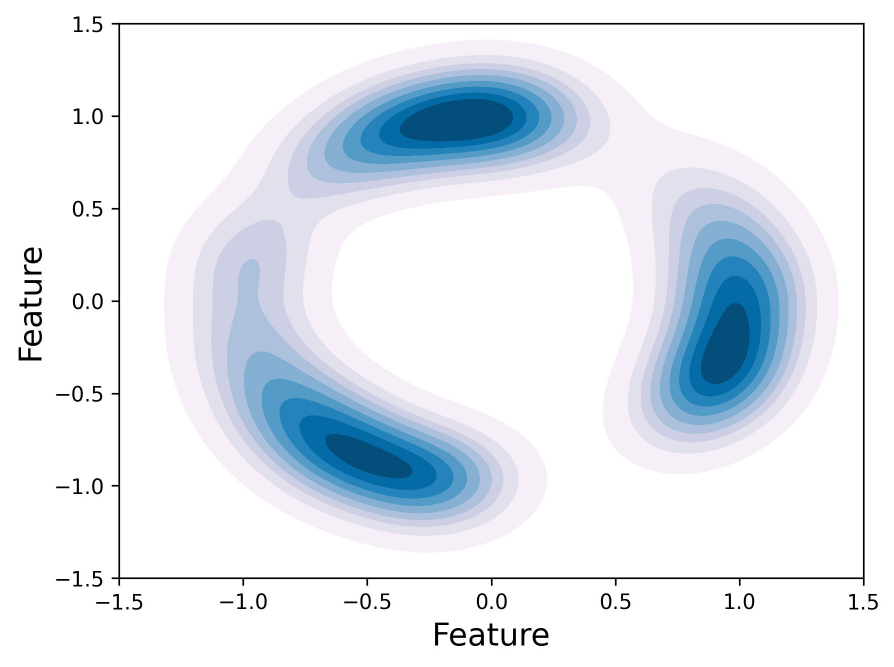}\label{collapse:MGCCN}
\hspace{0mm}
}
\subfigure[MGDCR]{
\includegraphics[width=5.5cm]{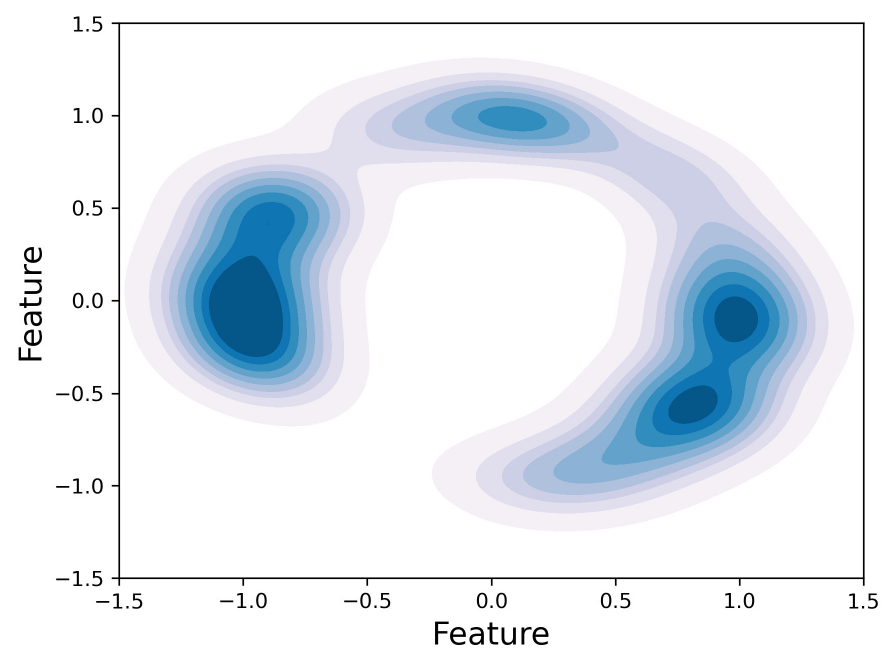}\label{collapse:MGDCR}
\hspace{0mm}
}
\subfigure[BTGF]{
\includegraphics[width=5.5cm]{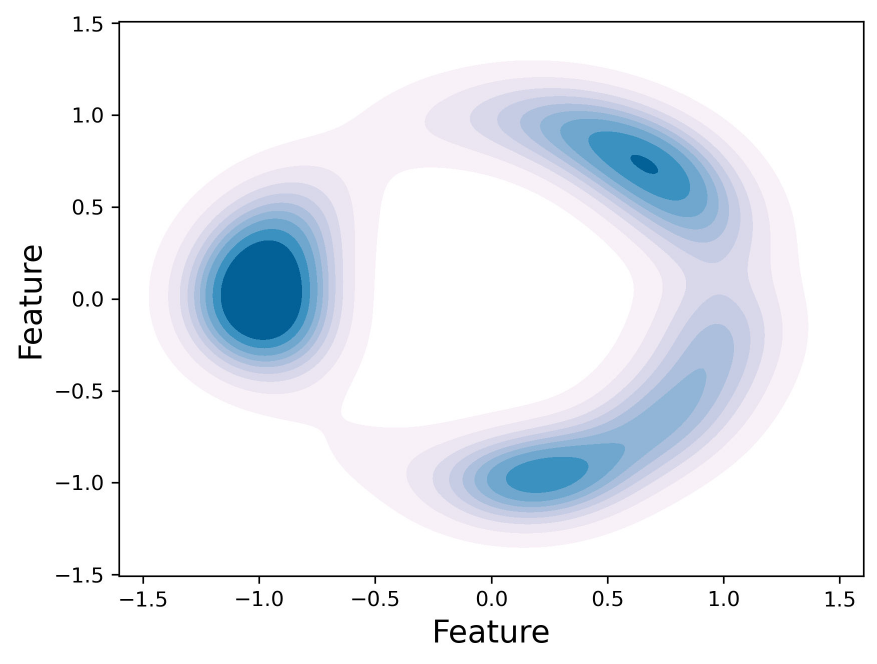}\label{collapse:BTFG}
\hspace{0mm}
}
\caption{Visualization of representation distributions utilizing Gaussian Kernel Density Estimation (KDE) \cite{botev2010kernel} on ACM. Representations are mapped onto a two-dimensional normalized vector through t-SNE. A darker color indicates a higher concentration of points within the area. 
A better collapse mitigation method makes the feature distributions more uniform.}
\label{collapse}
\end{figure*}

The overall architecture for multi-relational clustering is displayed in Fig. \ref{fig:architecture}. The learned graph filter aggregates information for each node from its neighbors, resulting in smooth representations. They serve as input to an encoder, which maps them into a clustering-favorable space. Subsequently, they are reconstructed by a decoder. After training, clustering results are obtained using the output of the encoder.

The main contributions of this paper can be summarized in three aspects:
\begin{itemize}
    \item We conduct a theoretical analysis to examine how the input affects the optimization of Barlow Twins. An input with a negative semi-definite inner product provides a lower bound for Barlow Twins loss, while an input with a positive semi-definite inner product yields an upper bound.
    
    \item A graph filter that facilitates Barlow Twins optimization is designed.  This filter ensures that the inner product of the encoder input is positive semi-definite, which enables Barlow Twins to have an upper bound and surpass the lower bound, leading to improved performance.

    \item A simple yet effective clustering architecture is developed. Experimental results on four multi-relational graph datasets demonstrate the superiority of BTGF, even when the network just employs a linear layer.  
\end{itemize}

\section{Related Work}

In recent years, numerous multi-view graph clustering methods have been proposed. Shallow methods MvAGC \cite{lin2021graph} and MCGC \cite{pan2021multi} employ a low-pass filter to embed relation information into attributes, and have achieved impressive results. Nevertheless, they just use a simple weight to differentiate various views and don't explicitly consider the correlations among different views.

Distinct from the shallow methods described above, deep methods attempt to learn good representations via designed neural networks. O2MAC \cite{fan2020one2multi} and MAGCN \cite{cheng2021multi} cluster multi-relational graphs using GCN. CMGEC \cite{wang2021consistent} applies mutual information maximization to capture complementary and consistent information of each view. HAN \cite{10.1145/3308558.3313562} and HDMI \cite{jing2021hdmi} apply the attention mechanism to fuse different relations. Due to the importance of structure in different views \cite{fang2022structure}, MGCCN \cite{10.1016/j.ins.2022.09.042} introduces a contrastive learning mechanism to capture consistent information between diverse views. However, these methods could be subject to representation collapse.

Two popular solutions to mitigate representation collapse are asymmetric model architecture, e.g., MoCo \cite{he2020momentum}, BYOL \cite{grill2020bootstrap}, SimSiam \cite{chen2021exploring}, and appropriate objective function, e.g., SimCLR \cite{chen2020simple}, Barlow Twins \cite{zbontar2021barlow}. BT introduces a novel cross-correlation objective function for feature decorrelation. Owing to its concise implementation, without negative sample generation and asymmetric networks, it has gained popularity in self-supervised learning \cite{zhang2021zero, zhang2022align}. Graph Barlow Twins \cite{bielak2022graph} optimizes the embeddings of two distorted views of a graph. Clustering methods are also prone to collapse, e.g., empty clusters in K-means. DCRN \cite{liu2022deep} extends the idea of Barlow Twins into deep clustering and designs a dual correlation reduction network to address representation collapse. However, it is a single-view model and cannot handle multi-relational graphs. DGCN \cite{pmlr-v202-pan23b} uses similar correlation reduction item to alleviate collapse. MVGC \cite{xia2022multi} and MGDCR \cite{mo2023multiplex} directly apply Barlow Twins to multi-relational graphs. However, they suffer insufficient optimization.
In this paper, we theoretically analyze the conditions for the existence of lower and upper bounds for Barlow Twins loss and design a new filter based on it.

\begin{figure*}[htb]
    \centering
    \includegraphics[width=1\textwidth]{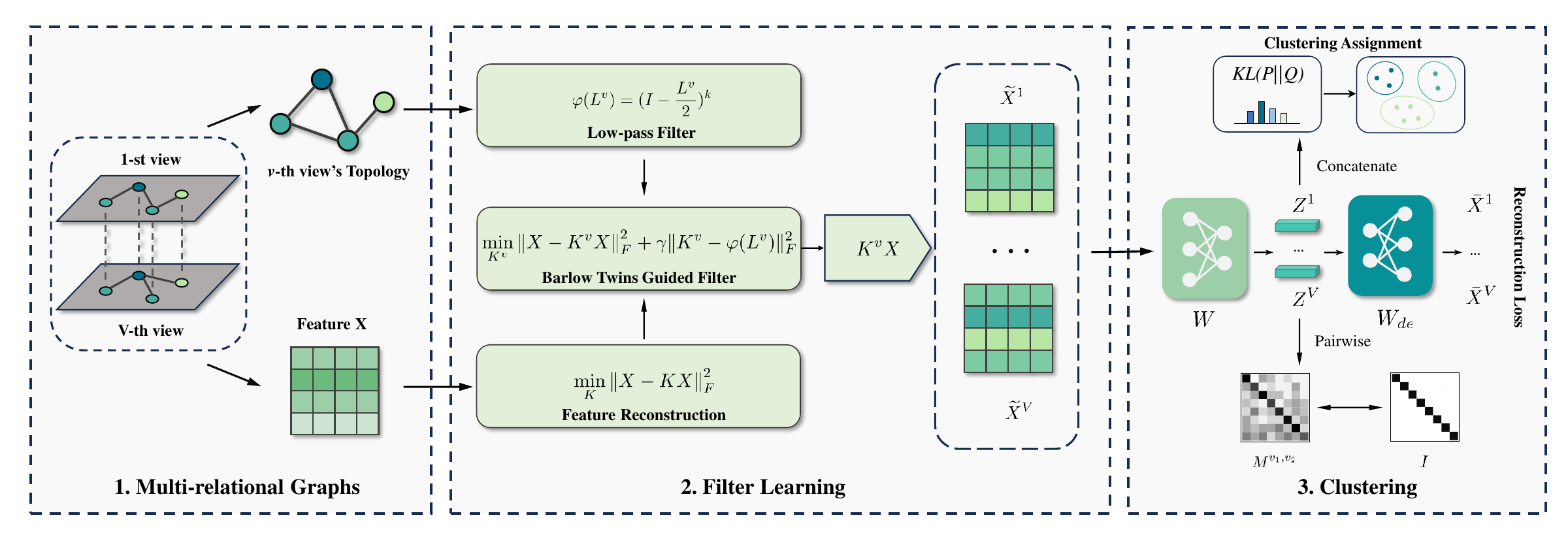}
    \caption{The proposed framework for multi-relational clustering.}
    \label{fig:architecture}
\end{figure*}

\section{Methodology}

\subsection{Notation}

Define the multi-relational graph as $G=\left\{\mathcal{V}, E_1, \ldots, E_{V}, X\right\}$, where $\mathcal{V}$ represents the sets of $n$ nodes, $e_{i j} \in E_v$ denotes the relationship between node $i$ and node $j$ in the $v$-th view. ${V} \geq 1$ is the number of relational graphs. $X=\left\{x_1, \ldots, x_n\right\}^{\top} \in {R}^{n \times f}$ is the attribute matrix, $f$ is the dimension of features. The adjacency matrix $\widetilde{A}^v$ characterizes the initial graph structure of the $v$-th view. $D^v$ represents the degree matrices. The normalized adjacency matrix is $A^v=\left(D^v\right)^{-\frac{1}{2}}\left(\widetilde{A}^v+I\right)\left(D^v\right)^{-\frac{1}{2}}$ and the corresponding graph Laplacian is $L^v=I-A^v$. And $c$ represents the number of node classes.

\subsection{Barlow Twins Guided Filter}

To inject the topology structure into features, graph filtering is performed as:
\begin{equation}
    \widetilde{X}^{v} = g(L^{v}) X,
\end{equation}
where $g(L^v)$ is the $v$-th view's graph filter.
Then, we represent the encoder as a multi-layer dimensional transformation $W \in R^{f\times d}$, where $d$ is the output dimension of the embedding $Z$. We can express $Z^{v}$ as:
\begin{equation}\label{embedding}
    Z^{v} = g(L^{v}) X W.
\end{equation}
Specifically, we aim to learn a single encoder $h: R^{n\times f} \to R^{n\times d}$ for different views, i.e., the encoders $Z^v = h(\widetilde{X}^v)$  share the same parameters.
Afterward, we compute Barlow Twins (BT) as follows:
\begin{equation}
\begin{aligned}\label{barlowtwins1}
    \mathcal{L}^{v_1,v_2}_{BT} &=\sum_{i=1}^{d}\left(M^{v_1,v_2}_{i i}-1\right)^{2}+\lambda \sum_{i=1}^{d} \sum_{j \neq i}^d (M^{v_1,v_2}_{i j})^{2}, \\
    &M^{v_1,v_2}_{i j} = \frac{{Z^{v_1}_{:i}}^{\top} Z^{v_2}_{:j}}{\left\|Z^{v_1}_{:i}\right\| \cdot\left\|Z^{v_2}_{:j}\right\|} = (\hat{Z}^{v_1}_{:i})^{\top} \hat{Z}^{v_2}_{:j},
\end{aligned}
\end{equation}  
where  $\hat{Z}^{v}$ is the column-normalized form of $Z^{v}$, i.e., $\hat{Z}^{v} = Z^{v}(\Lambda^{v})^{-1}$, $\Lambda^{v}$ is a diagonal matrix with positive elements $\left\|Z^{v}_{:i}\right\|$. $\lambda$ is a trade-off parameter that balances the invariance term and redundancy reduction term, which is fixed to 0.0051 \cite{zbontar2021barlow}.

To enhance the effectiveness of BT loss on
multi-relational graphs, we first provide some theoretical analysis. We find that the convergence of BT is influenced by the inner product between $\widetilde{X}^{v_1}$ and $\widetilde{X}^{v_2}$.

\begin{proposition}\label{proposition1} 
Barlow Twins has an explicit lower bound $\sum_{i=1}^d (\frac{\Lambda^{v_1}_{ii} \Lambda^{v_2}_{ii}}{max(\Lambda^{v_1}_{ii}) max(\Lambda^{v_2}_{ii})})^2$, if $(g(L^{v_{1}})X)^{\top} g(L^{v_{2}}) X$ is negative semi-definite. 
\end{proposition}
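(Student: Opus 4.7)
The plan is to reduce the bound to a per-coordinate statement by exploiting the fact that the first term of $\mathcal{L}_{BT}$ decouples across diagonal indices, while the redundancy term is non-negative and can be dropped when seeking a lower bound. Concretely, I would first discard the $\lambda\sum_{i\neq j}(M^{v_1,v_2}_{ij})^2$ contribution and focus solely on $\sum_{i=1}^{d}(M^{v_1,v_2}_{ii}-1)^2$, showing that each summand individually dominates $\bigl(\Lambda^{v_1}_{ii}\Lambda^{v_2}_{ii}/(\max \Lambda^{v_1}_{ii}\max \Lambda^{v_2}_{ii})\bigr)^2$.

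Next I would expand the diagonal cross-correlation using $Z^{v}=g(L^{v})XW$. Writing $Y^{v}:=g(L^{v})X$ and denoting the $i$-th column of $W$ by $W_{:i}$, one has
\begin{equation*}
M^{v_1,v_2}_{ii}=\frac{(Z^{v_1}_{:i})^{\top}Z^{v_2}_{:i}}{\Lambda^{v_1}_{ii}\Lambda^{v_2}_{ii}}=\frac{W_{:i}^{\top}(Y^{v_1})^{\top}Y^{v_2}W_{:i}}{\Lambda^{v_1}_{ii}\Lambda^{v_2}_{ii}}.
\end{equation*}
The hypothesis that $(Y^{v_1})^{\top}Y^{v_2}$ is negative semi-definite then forces the numerator to satisfy $W_{:i}^{\top}(Y^{v_1})^{\top}Y^{v_2}W_{:i}\le 0$, so that $M^{v_1,v_2}_{ii}\le 0$ for every $i$. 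Consequently $(M^{v_1,v_2}_{ii}-1)^{2}=1-2M^{v_1,v_2}_{ii}+(M^{v_1,v_2}_{ii})^{2}\ge 1$.

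To finish, I would observe that by definition of $\max(\Lambda^{v}_{ii})$ over the diagonal entries of $\Lambda^{v}$, each ratio $\Lambda^{v_1}_{ii}\Lambda^{v_2}_{ii}/\bigl(\max(\Lambda^{v_1}_{ii})\max(\Lambda^{v_2}_{ii})\bigr)\in[0,1]$, so its square is $\le 1\le(M^{v_1,v_2}_{ii}-1)^{2}$. Summing this inequality over $i$ and restoring the (non-negative) off-diagonal term yields the claimed bound.

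The only real subtlety is the NSD hypothesis applied to $(Y^{v_1})^{\top}Y^{v_2}$, which is generally not symmetric; I would handle this by noting that $W_{:i}^{\top}(Y^{v_1})^{\top}Y^{v_2}W_{:i}$ is a scalar and therefore equals its own transpose, so the quadratic form sees only the symmetric part of the matrix, and the statement $W_{:i}^{\top}(Y^{v_1})^{\top}Y^{v_2}W_{:i}\le 0$ is the correct reading of negative semi-definiteness here. Apart from this clarification, the argument is a direct coordinate-wise lower bound and the main obstacle is purely one of interpretation rather than analysis.
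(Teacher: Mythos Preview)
Your proposal is correct and follows essentially the same approach as the paper: drop the non-negative redundancy term, write $M^{v_1,v_2}_{ii}$ as the normalized quadratic form $W_{:i}^{\top}H^{v_1,v_2}W_{:i}/(\Lambda^{v_1}_{ii}\Lambda^{v_2}_{ii})$ with $H^{v_1,v_2}=(g(L^{v_1})X)^{\top}g(L^{v_2})X$, and use negative semi-definiteness to force this to be non-positive. The only cosmetic difference is that the paper inserts an intermediate step rescaling the denominator to $\max(\Lambda^{v_1}_{ii})\max(\Lambda^{v_2}_{ii})$ before invoking NSD, whereas you go more directly via $(M^{v_1,v_2}_{ii}-1)^{2}\ge 1\ge\bigl(\Lambda^{v_1}_{ii}\Lambda^{v_2}_{ii}/(\max\Lambda^{v_1}_{ii}\,\max\Lambda^{v_2}_{ii})\bigr)^{2}$; your remark on the symmetric part is a clarification the paper leaves implicit.
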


\begin{proof}
Denote $(g(L^{v_1})X)^{\top} g(L^{v_2}) X $ as $H^{v_1, v_2} \in R^{f \times f}.$ 
\begin{equation}\label{proof1}
\begin{aligned}
    M^{v_1, v_2}
    &= (\Lambda^{v_1})^{-1} {W}^{\top} (g(L^{v_1})X)^{\top} g(L^{v_2}) X W (\Lambda^{v_2})^{-1} \\
    &= (\Lambda^{v_1})^{-1}{W}^{\top} H^{v_1, v_2} W (\Lambda^{v_2})^{-1},
    \\ 
    \mathcal{L}^{v_1,v_2}_{BT} 
    &=\sum_{i=1}^{d}\left(M^{v_1,v_2}_{i i}-1\right)^{2}+\lambda \sum_{i=1}^{d} \sum_{j \neq i}^d (M^{v_1,v_2}_{i j})^{2}\\
    & \underset{(1.a)}{>} \sum_{i=1}^{d}\left(M^{v_1, v_2}_{i i}-1\right)^{2}\\
    &= \sum_{i=1}^{d} (\frac{1}{\Lambda^{v_1}_{ii} \Lambda^{v_2}_{ii}} {W}_{:i}^{\top} H^{v_1, v_2} {W}_{:i}-1)^{2}\\
    & \geq \sum_{i=1}^{d} (\frac{{W}_{:i}^{\top} H^{v_1, v_2} {W}_{:i}-\Lambda^{v_1}_{ii} \Lambda^{v_2}_{ii}}{max(\Lambda^{v_1}_{ii}) max(\Lambda^{v_2}_{ii})})^{2}\\
    & \underset{(1.b)}{\geq} \sum_{i=1}^{d} (\frac{\Lambda^{v_1}_{ii} \Lambda^{v_2}_{ii}}{max(\Lambda^{v_1}_{ii}) max(\Lambda^{v_2}_{ii})})^{2},\\
\end{aligned}
\end{equation}
where (1.a): $\lambda > 0$ and 
(1.b): Since $H^{v_1, v_2}$ is negative semi-definite, we have 
${W}_{:i}^{\top} H^{v_1, v_2} {W}_{:i} \leq 0$, which implies $({W}_{:i}^{\top} H^{v_1, v_2} {W}_{:i}-\Lambda^{v_1}_{ii} \Lambda^{v_2}_{ii})^{2} \geq (\Lambda^{v_1}_{ii} \Lambda^{v_2}_{ii})^{2}.$
\end{proof}

The optimization objective of BT is to make the diagonal elements converge to 1 and the off-diagonal elements converge to zero, thus driving the BT value close to zero. However, having a constant positive lower bound in Proposition. \ref{proposition1} undermines the capability of BT to effectively explore multi-view correlations and alleviate representation collapse. Additionally, clustering models often involve several loss terms for multi-objective optimization. Consequently, BT could be influenced by other losses during the training process, which makes it impossible to reach zero. To address this problem, we attempt to impose an upper bound on BT.

\begin{proposition}\label{remark1}
If $H^{v_1, v_2}$ is a positive semi-definite matrix,  
Barlow Twins will be upper bound by $d+\lambda \sum_{i=1}^{d} \sum_{j \neq i}^d (M^{v_1, v_2}_{i j })^{2}$. 
\end{proposition}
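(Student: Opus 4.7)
The plan is to bound the invariance term $\sum_{i=1}^{d}(M^{v_1,v_2}_{ii}-1)^2$ by $d$, and then simply leave the redundancy term as is. The key observation is that each diagonal entry $M^{v_1,v_2}_{ii}$ is a normalized cross-correlation, i.e., an inner product of two unit vectors $\hat{Z}^{v_1}_{:i}$ and $\hat{Z}^{v_2}_{:i}$, so by Cauchy–Schwarz we already have $M^{v_1,v_2}_{ii} \leq 1$ without any assumption on $H^{v_1,v_2}$. What the positive semi-definiteness of $H^{v_1,v_2}$ buys us is the matching lower bound $M^{v_1,v_2}_{ii}\geq 0$.

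Concretely, I would proceed as follows. First, reuse the identity from the proof of Proposition \ref{proposition1},
\begin{equation*}
M^{v_1,v_2}_{ii} = \frac{1}{\Lambda^{v_1}_{ii}\,\Lambda^{v_2}_{ii}}\, W_{:i}^{\top} H^{v_1,v_2} W_{:i}.
\end{equation*}
Since $H^{v_1,v_2}$ is positive semi-definite, the quadratic form $W_{:i}^{\top} H^{v_1,v_2} W_{:i}$ is non-negative, and the diagonal entries of $\Lambda^{v_1},\Lambda^{v_2}$ are strictly positive by construction, so $M^{v_1,v_2}_{ii}\geq 0$. Second, invoke Cauchy–Schwarz on the alternative expression $M^{v_1,v_2}_{ii}=(\hat{Z}^{v_1}_{:i})^{\top}\hat{Z}^{v_2}_{:i}$, which gives $|M^{v_1,v_2}_{ii}|\leq \|\hat{Z}^{v_1}_{:i}\|\cdot\|\hat{Z}^{v_2}_{:i}\|=1$. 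Combining the two estimates yields $0\leq M^{v_1,v_2}_{ii}\leq 1$, hence $(M^{v_1,v_2}_{ii}-1)^2\leq 1$ for every $i$.

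Finally, I would sum over $i$ to get $\sum_{i=1}^{d}(M^{v_1,v_2}_{ii}-1)^2\leq d$, and add the (untouched) redundancy term to obtain
\begin{equation*}
\mathcal{L}^{v_1,v_2}_{BT}\;\leq\; d \;+\; \lambda\sum_{i=1}^{d}\sum_{j\neq i}^{d}(M^{v_1,v_2}_{ij})^{2},
\end{equation*}
which is exactly the claimed upper bound. I do not anticipate a genuine obstacle here; the only subtlety is making sure the two bounds on $M^{v_1,v_2}_{ii}$ come from logically independent places, namely Cauchy–Schwarz for the upper bound and the positive semi-definiteness of $H^{v_1,v_2}$ for the lower bound, so that the role of the hypothesis is transparent and parallels the role of the negative semi-definite hypothesis in Proposition \ref{proposition1}.
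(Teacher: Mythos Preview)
Your proposal is correct and follows essentially the same route as the paper: you show $M^{v_1,v_2}_{ii}\in[0,1]$ by combining the Cauchy--Schwarz (cosine-similarity) bound $|M^{v_1,v_2}_{ii}|\leq 1$ with the nonnegativity coming from the positive semi-definiteness of $H^{v_1,v_2}$, then sum to bound the invariance term by $d$ and leave the redundancy term untouched. This matches the paper's argument step for step.
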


\begin{proof}
\begin{equation}
    \begin{aligned}
    \mathcal{L}^{v_1,v_2}_{BT} 
    &= \sum_{i=1}^{d} (\frac{1}{\Lambda^{v_1}_{ii} \Lambda^{v_2}_{ii}} {W}_{:i}^{\top} H^{v_1, v_2} {W}_{:i}-1)^{2} \\
    &+\lambda \sum_{i=1}^{d} \sum_{j \neq i}^d (M^{v_1, v_2}_{i j})^{2}\\
    &\underset{(2.a)}{\leq} d+\lambda \sum_{i=1}^{d} \sum_{j \neq i}^d (M^{v_1, v_2}_{i j })^{2},
\end{aligned}
\end{equation}
where (2.a): $M^{v_1, v_2}_{i i}$ is the cosine similarity of $Z^{v_{1}}_{: i}$ and $Z^{v_{2}}_{: i}$, so $M^{v_1, v_2}_{ i i }\in [-1,1]$. If $H$ is  positive semi-definite, we have $M^{v_1, v_2}_{i i}=\frac{1}{\Lambda^{v_1}_{ii} \Lambda^{v_2}_{ii}} {W}_{:i}^{\top} H^{v_1, v_2} {W}_{:i} \geq 0 $, i.e., $M^{v_1, v_2}_{ i i }\in [0,1]$. Therefore, the first term is bounded by $d$.
\end{proof}

The positive semi-definite property in the Proposition. \ref{remark1} prevents BT from having the lower bound in Proposition. \ref{proposition1}. More importantly, this upper bound constrains BT loss to a small range. 
Consequently, to improve the performance of BT in real applications, we design a filter that ensures $H^{v_1, v_2}$ be positive semi-definite. A simple way is to let $(g(L^{v_1})X)^{\top} g(L^{v_2}) X\approx X^{\top} X $, which is positive semi-definite due to the nature of the inner product of matrices. 
For notation simplicity, we use $K \in {\cal{R}}^{n\times n}$ to represent the filter $g(\cdot)$. The filter can be approximately obtained by solving the following problem:
\begin{equation}\label{eq: self-expression}
\min _{K}\left\|X- KX\right\|_F^2,
\end{equation}
where $\|\cdot\|_F$ denotes the Frobenius norm. Eq. \ref{eq: self-expression} could produce a trivial solution and it neglects the rich topology information in graphs. Hence, we design a regularizer to preserve the most important low-frequency information. Considering the variation in each graph, we learn a filter for each view. 
\begin{equation}\label{filter: equation}
\min _{K^v}\left\|{X} - K^v {X}\right\|_F^2+\gamma\|K^v -\varphi(L^v)\|_F^2,
\end{equation}
where $\gamma>0$ is trade-off parameter and $\varphi(L^v)$ is a typical low-pass filter \cite{zhang2019attributed}:
\begin{equation}
    \varphi(L^v) = (I - \frac{L^v}{2})^k,
\end{equation}
where $k$ is the filter order. Eq. \ref{filter: equation} can be easily solved by setting its first-order derivative w.r.t. $K^v$ to zero, which yields,
\begin{equation}\label{filter: solution1}
K^v = \left(X {X}^{\top}+\gamma I\right)^{-1}\left(\gamma \varphi(L^v)+X {X}^{\top}\right).
\end{equation}
However, the $\mathcal{O}\left(n^3\right)$ computational complexity of the inverse operation is expensive. This problem can be alleviated by using the Woodbury matrix identity \cite{higham2002accuracy}. 
Then, the graph filter $K^v$ can be reformulated as follows:
\begin{equation}\label{filter: solution2}
\begin{aligned}
K^v =\; & [\frac{1}{\gamma} I-\frac{1}{\gamma^2} X(I+\frac{1}{\gamma} {X}^{\top} {X})^{-1} {X}^{\top}]\left(\gamma \varphi(L^v) +{X} {X}^{\top}\right) \\
 =\; & \frac{X {X}^{\top}}{\gamma} -\frac{X (I+\frac{1}{\gamma} {X}^{\top} {X})^{-1}{X}^{\top}
\left(\gamma \varphi(L^v)+X{X}^{\top}\right)}{\gamma^2}\\
 \; & + \varphi(L^v).
\end{aligned}
\end{equation}

The complexity is reduced from $\mathcal{O}\left(n^3\right)$ to $\mathcal{O}\left(n^2 f\right)$ when $n>f$. It can be seen from Eq. \ref{filter: solution2} that the filter is essentially a feature-based modification to the conventional filter, which is solely based on graph structure. Different from existing works \cite{pan2021multi, lin2021graph}, the filters are not independent for each graph but correlated by feature, thus they can capture the correlations between views to some extent.

\subsection{Multi-relational Graph Clustering}

The multi-relational graph is first processed by the filter to obtain $\widetilde{X}^v$:
\begin{equation}
    \widetilde{X}^v = K^v X.
\end{equation}
The smoothed features $\widetilde{X}^v$ integrate view-specific topology with the node attributes. We then feed $\widetilde{X}$ into a network that consists of linear layers. Specifically, the encoder and decoder for each view share the same parameters, so the entire network can be considered as one auto-encoder.

The encoder maps $\widetilde{X}^{v}$ into feature subspace to obtain embedding $Z^v$. The embeddings from different views are paired to compute Barlow Twins, and the resulting values are averaged. We denote the result as a feature decorrelation loss:
\begin{equation}
\mathcal{L}_{FD} = \frac{1}{\tbinom{V}{2}}\sum^{V}_{v_1 \not= v_2}\sum^{V}_{v_2=1} \mathcal{L}^{v_1,v_2}_{BT}.
\end{equation}

We use target distribution and soft cluster assignment probabilities distribution to enhance clustering quality \cite{tu2021deep}. All embeddings are concatenated into $Z = [Z^1, Z^2,..., Z^V]\in R^{n\times Vd}$ as the input to this self-supervised clustering module.
The soft assignment distribution $Q$ can be formulated as:
\begin{equation}\label{soft assignment}
    q_{i j}=\frac{{\left(1+\left\|z_i-\sigma_j\right\|^2\right)}^{-1}}{\sum_{j^{\prime}}\left(1+\left\|z_i-\sigma_{j^{\prime}}\right\|^2\right)^{-1}},
\end{equation}
where $q_{i j}$ is measured by Student's $t$-distribution to indicate the similarity between node $i$'s embedding $z_i$ and clustering center $\sigma_j$ that is initialized by the centers resulting from the $k$-means implemented on $Z$.
The target distribution $P$ is computed as:
\begin{equation}\label{target distribution}
    p_{i j}=\frac{q_{i j}^2 / \sum_i q_{i j}}{\sum_{j^{\prime}}\left(q_{i j^{\prime}}^2 / \sum_i q_{i j^{\prime}}\right)}. 
\end{equation}
We then minimize the KL divergence between the $Q$ and $P$ distributions to encourage the data representation to align with cluster centers and enhance cluster cohesion \cite{kullback1951information}:
\begin{equation}\label{loss: clu}
    \mathcal{L}_{C L U}=K L(P \| Q)=\sum_i \sum_u p_{i u} \log \frac{p_{i u}}{q_{i u}}.
\end{equation}

Afterward, the decoder is utilized to obtain reconstructed features: 
\begin{equation}
    \bar{X}^v = Z^v W_{de},
\end{equation}
where $W_{de} \in R^{d \times f}$. It is worth noting that the features of certain ``easy samples" only exhibit tiny changes during reconstruction, suggesting that these nodes provide little informative input for our network. The Scaled Cosine Error (SCE) \cite{hou2022graphmae} is adopted as the reconstruction objective function in our model for down-weighting the contribution of those easy samples:
\begin{equation}\label{loss: sce}
\mathcal{L}^v_{SCE} = \sum_{i=1}^n \left(1 - \frac{{(\widetilde{X}^v_i)^{\top} \bar{X}^v_i}}{{\|\widetilde{X}^v_i\| \cdot \|\bar{X}^v_i\|}}\right)^2
\end{equation}
Same as Barlow Twins,  $\mathcal{L}_{SCE}$ needs to be summed and averaged:
\begin{equation}
\mathcal{L}_{MSCE} = \frac{1}{V} \sum^{V}_{v=1} \mathcal{L}^{v}_{SCE}.
\end{equation}
By combining $\mathcal{L}_{FD}$, $\mathcal{L}_{MSCE}$, and $\mathcal{L}_{C L U}$, the overall objective function of BTGF can be computed as:
\begin{equation}\label{loss: all}
    \mathcal{L}=\mathcal{L}_{MSCE} + \mathcal{L}_{FD}+ \mathcal{L}_{CLU}.
\end{equation}
We minimize the above objective function to optimize our auto-encoder and achieve the cluster label $y_i$ for node $i$ by:
\begin{equation}\label{node clustering}
    y_i=\underset{j}{\operatorname{argmax}}\hspace{0.4mm} q_{i j}.
\end{equation}

\section{Experiment}

\begin{table}[httt]
\caption{The statistics of datasets.}
\label{table:dataset}
 \resizebox{\linewidth}{!}{
\begin{tabular}{cccccc}
\toprule[2pt]%
\textbf{Datasets} & \textbf{Nodes} & \textbf{Relation Types}                                                                                      & \textbf{Edges}                                                       & \textbf{Attributes} & \textbf{Classes} \\ \midrule[1pt]%
\textbf{ACM}      & 3,025          & \begin{tabular}[c]{@{}c@{}}Paper-Author-Paper\\ Paper-Subject-Paper\end{tabular}                             & \begin{tabular}[c]{@{}c@{}}29,281\\ 2,210,761\end{tabular}           & 1, 830              & 3                \\ \midrule[1pt]%
\textbf{AMiner}   & 6,564          & \begin{tabular}[c]{@{}c@{}}Paper-Author-Paper\\ Paper-Reference-Paper\end{tabular}                           & \begin{tabular}[c]{@{}c@{}}15,412\\ 123,260\end{tabular}             & 6,564               & 4                \\ \midrule[1pt]%
\textbf{DBLP}     & 7, 907         & \begin{tabular}[c]{@{}c@{}}Paper-Paper-Paper\\ Paper-Author-paper\end{tabular}                               & \begin{tabular}[c]{@{}c@{}}90,145\\ 144,783\end{tabular}             & 2, 000              & 4                \\ \midrule[1pt]%
\textbf{Amazon}   & 7, 621         & \begin{tabular}[c]{@{}c@{}}Item-AlsoView-Item\\ Item-AlsoBought-Item\\ Item-BoughtTogether-Item\end{tabular} & \begin{tabular}[c]{@{}c@{}}266,237\\ 1,104,257\\ 16,305\end{tabular} & 2, 000              & 4                \\ \bottomrule[2pt]%
\end{tabular}}
\end{table}

\begin{table*}[!htb]
\caption{Node clustering results.}
\label{clusering results}
\resizebox{\linewidth}{!}{
\begin{tabular}{c|cccc|cccc|cccc|cccc}
\toprule[2pt]%
\textbf{Dataset}   & \multicolumn{4}{c|}{\textbf{ACM}}                                                                                       & \multicolumn{4}{c|}{\textbf{DBLP}}                                                                                                         & \multicolumn{4}{c|}{\textbf{Amazon}}                                                                                                   & \multicolumn{4}{c}{\textbf{AMiner}}                                                                                           \\ \midrule[1pt]%
\textbf{Metric}    & ACC                                    & F1                            & NMI            & ARI                           & ACC                                    & F1                                     & NMI             & ARI                                    & ACC                                    & F1                            & NMI                           & ARI                           & ACC                           & F1                            & NMI                           & ARI                           \\ \midrule[1pt]%
HAN(2019)          & 0.8823                                 & 0.8844                        & 0.5881         & 0.5933                        & 0.7651                                 & 0.6309                                 & 0.4866          & 0.4635                                 & 0.4355                                 & 0.4246                        & 0.1120                        & 0.0362                        & {0.7119} & {0.5340} & {0.2020} & {0.1260}  \\
O2MAC(2020)        & 0.9042                                 & 0.9053                        & 0.6923         & 0.7394                        & 0.7267                                 & 0.7320                                 & 0.4066          & 0.4036                                 & 0.4428                                 & 0.4424                        & 0.1344                        & 0.0898                        & 0.4939                        & {0.3202} & {0.0857} & {0.0552} \\
MvAGC(2021)        & 0.8975                                 & 0.8986                        & 0.6735         & 0.7212                        & 0.7221                                 & 0.7332                                 & 0.4191          & 0.4049                                 & 0.5188                                 & 0.5072                        & 0.2322                        & 0.1141                        & {0.5472} & {0.1781} & 0.0452                        & {0.0003} \\
HDMI(2021)         & 0.8740                                 & 0.8720                        & 0.6450         & 0.6740                        & 0.8010                                 & {0.7898}          & 0.5820          & {0.5356}          & {0.5251}          & {0.5448} & {0.3702} & {0.2735} & 0.4032                        & {0.3023} & {0.1349} & {0.0314} \\
MCGC(2021)         & 0.9147                                 & 0.9155                        & 0.7126         & 0.7627                        & 0.7850                                 & {0.7359}          & 0.5510          & {0.4439}          & {0.4683}          & {0.4804} & {0.2149} & {0.1056} & {0.4165} & {0.3982} & {0.2254} & {0.1608} \\
MGCCN(2022)        & 0.9167                                 & 0.8472                        & 0.7095         & 0.7688                        & 0.8301                                 & 0.7336                                 & 0.6156          & 0.5876                                 & 0.5309                                 & 0.4572                        & 0.1931                        & 0.1860                        & 0.6039                        & 0.5311                        & 0.2039                        & 0.2883                        \\
MGDCR(2023)        & 0.9190                                 & {0.8678} & 0.7210         & {0.6496} & 0.8070                                 & {0.8048}          & 0.6140          & {0.5259}          & {0.3489}          & {0.2039} & {0.0318} & {0.0055} & {0.5150} & {0.2533} & {0.0265} & {0.0300} \\ \midrule[1pt]%
\textbf{BTGF} & {\textbf{0.9322}} & \textbf{0.9331}               & \textbf{0.758} & \textbf{0.8085}               & {\textbf{0.8309}} & {\textbf{0.8384}} & \textbf{0.6242} & {\textbf{0.5969}} & {\textbf{0.6603}} & \textbf{0.6612}               & \textbf{0.3853}               & \textbf{0.2829}               & \textbf{0.7308}               & \textbf{0.5408}               & \textbf{0.3603}               & \textbf{0.5233}                \\ \bottomrule[2pt]%
\end{tabular}}
\end{table*}

\subsection{Datasets and Metrics}
To show the effectiveness of BTGF, we evaluate our method on four multi-relational graphs. ACM and DBLP \cite{fan2020one2multi} are citation graph networks. Amazon \cite{he2016ups} is a review graph network. AMiner \cite{wang2021self} is an academic graph network. The statistical information of them is summarized in Table \ref{table:dataset}.
We adopt four popular clustering metrics, including ACCuracy (ACC), Normalized Mutual Information (NMI), F1 score, and Adjusted Rand Index (ARI). A higher value of them indicates a better performance.

\subsection{Experimental Setup}
{
\setlength{\parindent}{0cm}
\paragraph{\bf{Parameter Setting}}The experiments are implemented in the PyTorch platform
using an Intel(R) Core(TM) i9-12900k CPU, and GeForce GTX 3090 24G GPU.
Our auto-encoder is trained by Adam optimizer \cite{kingma2017adam} for 400 epochs. 
For simplicity, both our encoder and decoder only have one linear layer, denoted as $W \in R^{f \times d}$, $W_{de} \in R^{d\times f}$ respectively, where $d=10$.
The learning rate and weight decay of the optimizer are set to $1e^{-2}$ and $1e^{-3}$, respectively. The filter's parameters $k$ and $\gamma$ is tuned in $[1,2,3,4,5]$ and $[0.1,1,10,100,1000]$, respectively.
}
{
\setlength{\parindent}{0cm}
\paragraph{\bf{Comparison Methods}}We compare BTGF with multi-view methods: HAN \cite{10.1145/3308558.3313562} and HDMI \cite{jing2021hdmi}, which use the attention mechanism; O2MAC \cite{fan2020one2multi} clusters multi-relational data by selecting an informative graph. We also compare BTGF with contrastive learning methods, such as MCGC \cite{pan2021multi} and MGCCN \cite{10.1016/j.ins.2022.09.042}. MvAGC \cite{lin2021graph} and MGDCR \cite{mo2023multiplex} are also compared. Particularly, MvAGC and MCGC both employ graph filters to preprocess the attribute. MGDCR utilizes Barlow Twins as an objective function for multi-relational clustering. For an unbiased comparison, we copy part of the results from MCGC.
}

\subsection{Clustering Results}

The clustering results are reported in Table \ref{clusering results}. From it, we have the following observations:
\begin{itemize}
    \item The advantages of BTGF are clear when compared to deep multi-view clustering methods: HAN, HDMI, O2MAC, and MGCCN. With respect to the most recent MGCCN, our method improves ACC, F1, NMI, ARI by 11\%, 16\%, 45\%, 34\% on average, respectively. Note that MGCCN uses a contrastive learning mechanism to fuse representations of different views. 
    As shown in Fig. \ref{collapse}, our superiority stems from the incorporation of Barlow Twins, which significantly enhances the integration of information from different graphs.

    \item In comparison to shallow MvAGC and MCGC methods, our approach greatly boosts clustering performance. Both MvAGC and MCGC employ a low-pass filter on each graph to smooth the attributes, which fails to consider the correlation between different views. This is exactly the issue that BTGF aims to tackle. 
    
    \item BTGF outperforms Barlow Twins-based method MGDCR. In particular, the improvement on Amazon and AMiner is significant. This could be caused by the large number of hyperparameters introduced in computing the pairwise loss in MGDCR, which are hard to find the most appropriate values.
\end{itemize}
In summary, BTGF consistently outperforms all compared methods in terms of four metrics over all datasets. The stable results obtained with such a simple network demonstrate the validity of our filter and the clustering architecture.
They could be further improved if a more complex auto-encoder architecture with deep artifices such as activation functions or dropout is applied. 
\begin{table}[ht]
\centering
\caption{Clustering performance of variants of BTGF.}
\label{ablation: loss}
 \resizebox{\linewidth}{!}{
\begin{tabular}{cccccc}
\toprule[2pt]%
\multicolumn{2}{c}{\textbf{Datasets}}              & \textbf{ACM}    & \textbf{DBLP}   & \textbf{Amazon} & \textbf{AMiner} \\ \midrule[1pt]%
\multirow{3}{*}{\textbf{ACC}} & \textbf{BTGF} & \textbf{0.9322} & \textbf{0.8309} & \textbf{0.6603} & \textbf{0.7308} \\
                              & w/o $\mathcal{L}_{FD}$             & 0.9121          & 0.8212          & 0.5885          & 0.6817          \\
                              & w/o $\mathcal{L}_{MSCE}$            & 0.9296          & 0.8041          & 0.5149          & 0.5506          \\ \midrule[1pt]%
\multirow{3}{*}{\textbf{F1}}  & \textbf{BTGF} & \textbf{0.9331} & \textbf{0.8384} & \textbf{0.6612} & \textbf{0.5308} \\
                              & w/o $\mathcal{L}_{FD}$             & 0.9130          & 0.8273          & 0.5272          & 0.4688          \\
                              & w/o $\mathcal{L}_{MSCE}$            & 0.9305          & 0.8080          & 0.4564          & 0.3209          \\ \midrule[1pt]%
\multirow{3}{*}{\textbf{NMI}} & \textbf{BTGF} & \textbf{0.758}  & \textbf{0.6242} & \textbf{0.3853} & \textbf{0.3603} \\
                              & w/o $\mathcal{L}_{FD}$             & 0.6988          & 0.5900          & 0.3695          & 0.3540          \\
                              & w/o $\mathcal{L}_{MSCE}$            & 0.7525          & 0.5760          & 0.2223          & 0.0612          \\ \midrule[1pt]%
\multirow{3}{*}{\textbf{ARI}} & \textbf{BTGF} & \textbf{0.8085} & \textbf{0.5969} & \textbf{0.2829} & \textbf{0.5233} \\
                              & w/o $\mathcal{L}_{FD}$             & 0.7553          & 0.5788          & 0.2590          & 0.5017          \\
                              & w/o $\mathcal{L}_{MSCE}$            & 0.8019          & 0.5501          & 0.1640          & 0.1054          \\ \bottomrule[2pt]%
\end{tabular}}
\end{table}
\subsection{Ablation Study}
\subsubsection{The Effect of Different Loss Terms}

To validate the effectiveness of different components in our model, we compare the performance of BTGF with its two variants:
\begin{itemize}
    \item Employing BTGF without $\mathcal{L}_{FD}$ to show the importance of utilizing feature decorrelation. 
    \item Employing BTGF without $\mathcal{L}_{MSCE}$ to observe the impact of the decoder and feature reconstruction. It means training an encoder with $\mathcal{L}_{FD} + \mathcal{L}_{CLU}$ and no decoder. 
\end{itemize}
Based on Table \ref{ablation: loss}, we can draw the following conclusions. Firstly, the results of BTGF are better than all variants, which indicates the validity of each term. Second, the Barlow Twins plays a crucial role in fusing different views. Especially for Amazon with three relation types, the impact of the correlation regularizer $\mathcal{L}_{FD}$ is huge. In addition, the feature reconstruction loss used to down-weight easy samples’ contribution is proved to be helpful in improving the clustering performance.

\begin{figure*}[htb]
\centering
\subfigure[ACM]{
\includegraphics[width=4.6cm]{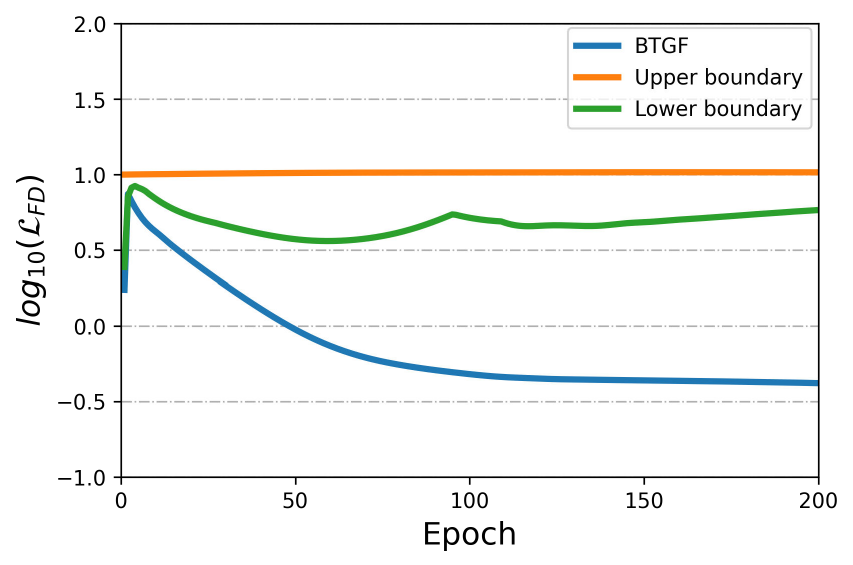}
\hspace{-5mm}
}
\subfigure[DBLP]{
\includegraphics[width=4.6cm]{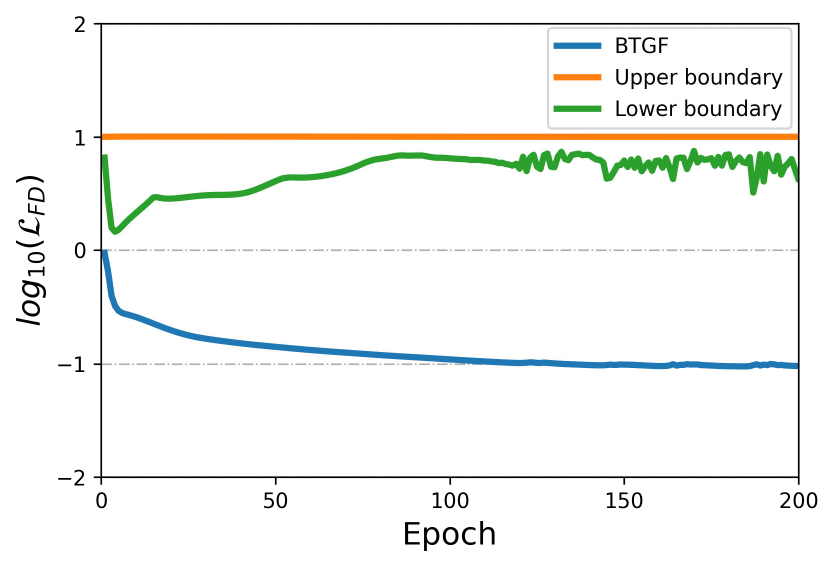}
\hspace{-5mm}
}
\subfigure[Amazon]{
\includegraphics[width=4.6cm]{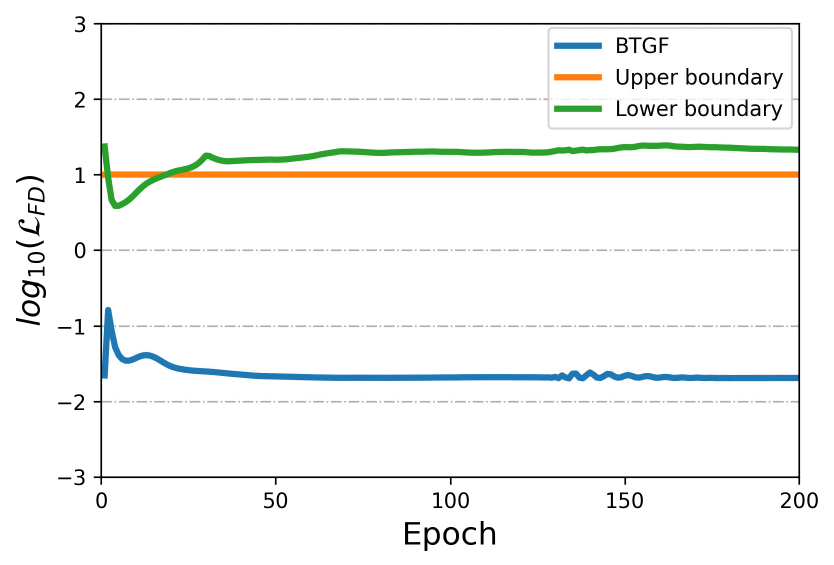}
\hspace{-5mm}
}
\subfigure[AMiner]{
\includegraphics[width=4.6cm]{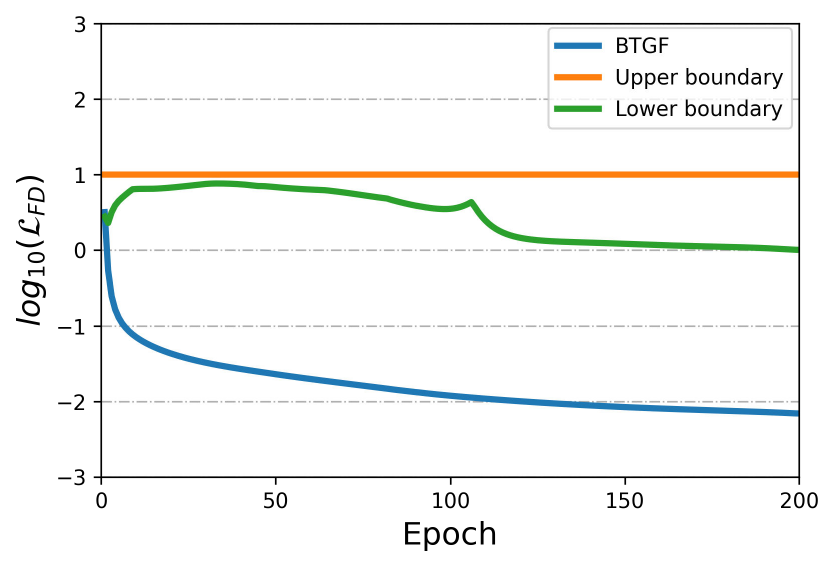}
\hspace{-5mm}
}
\caption{Verification of $\mathcal{L}_{FD}$ surpassing the lower boundary as well as having an upper boundary.}
\label{curves: bound}
\end{figure*}

\begin{figure*}[htb]
\centering
\subfigure[ACM]{
\includegraphics[width=4.6cm]{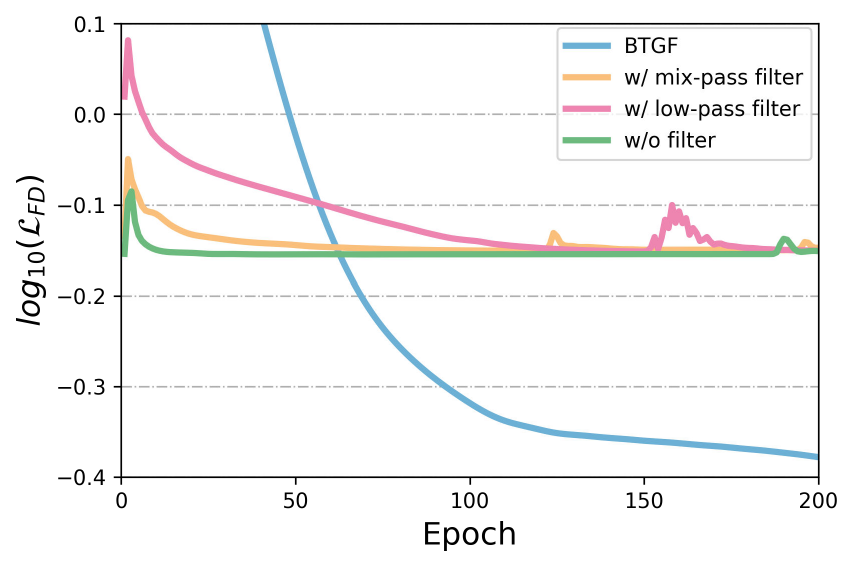}
\hspace{-5mm}
}
\subfigure[DBLP]{
\includegraphics[width=4.6cm]{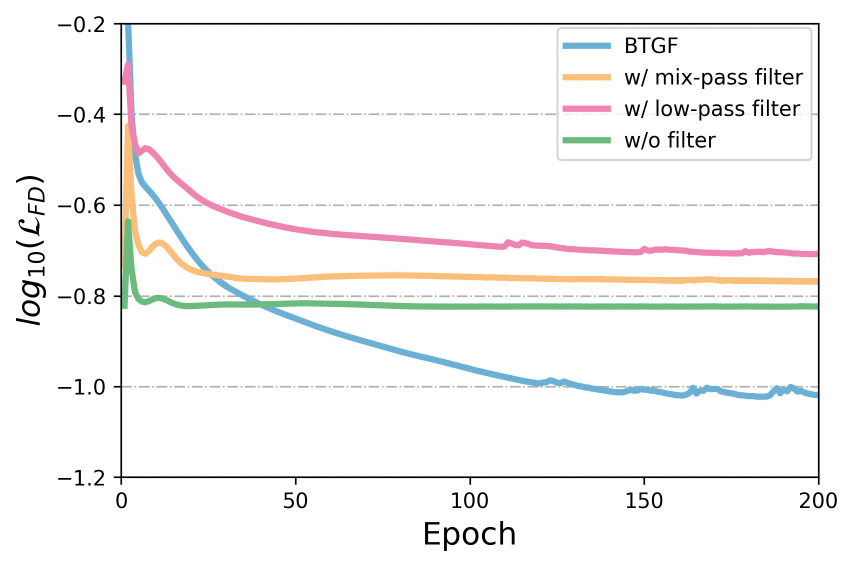}
\hspace{-5mm}
}
\subfigure[Amazon]{
\includegraphics[width=4.6cm]{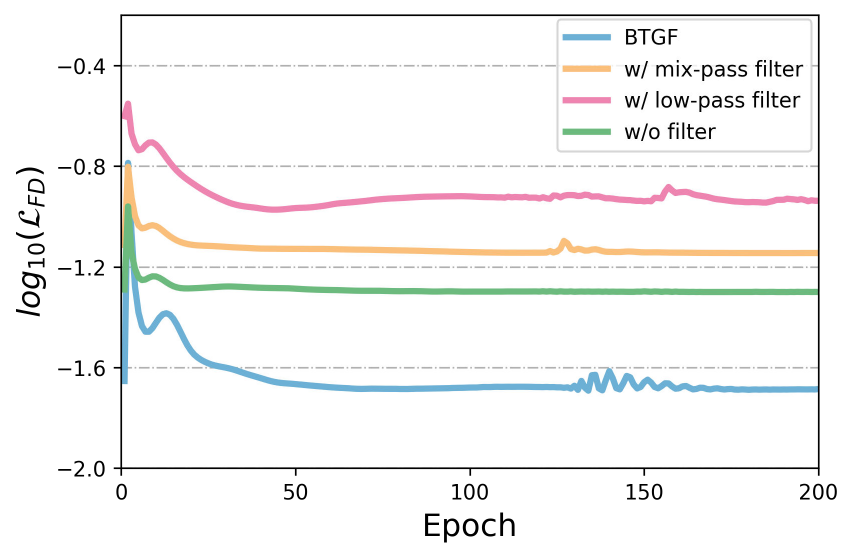}
\hspace{-5mm}
}
\subfigure[AMiner]{
\includegraphics[width=4.6cm]{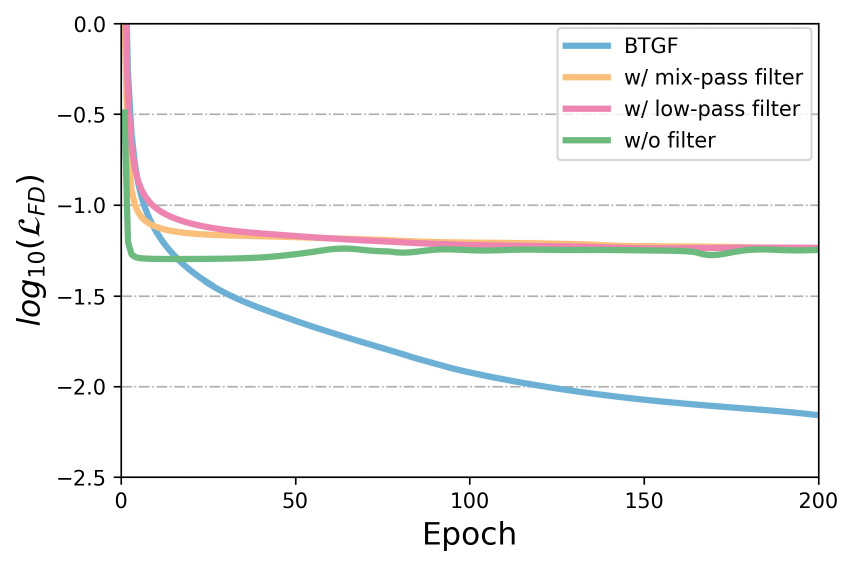}
\hspace{-5mm}
}
\caption{The evolution of feature decorrelation loss $\mathcal{L}_{FD}$ when using different filters.}
\label{curves: barlowtwins}
\end{figure*}

\begin{table}[ht]
\centering
\caption{Clustering accuracy with different filters.}
\label{ablation: filter}
 \resizebox{\linewidth}{!}{
\begin{tabular}{ccccc}
\toprule[2pt]%
\textbf{Datasets}    & \textbf{ACM}    & \textbf{DBLP}   & \textbf{Amazon}               & \textbf{AMiner} \\ \midrule[1pt]%
\textbf{BTGF}   & \textbf{0.9322} & \textbf{0.8309} & \textbf{0.6603}               & \textbf{0.7308} \\ \midrule
w/ mix-pass filter      & 0.9250          & 0.8148          & 0.6462                        & 0.5716          \\ \midrule[1pt]%
w/ low-pass filter      & 0.9088          & 0.8195          & 0.6184                        & 0.7116          \\ \midrule[1pt]%
w/o filter & 0.8707          & 0.7166          & 0.6004     & 0.3326          \\ \bottomrule[2pt]%
\end{tabular}}
\end{table}

\begin{figure*}[!htbp]
\centering
\subfigure[ACM]{
\includegraphics[width=4.5cm]{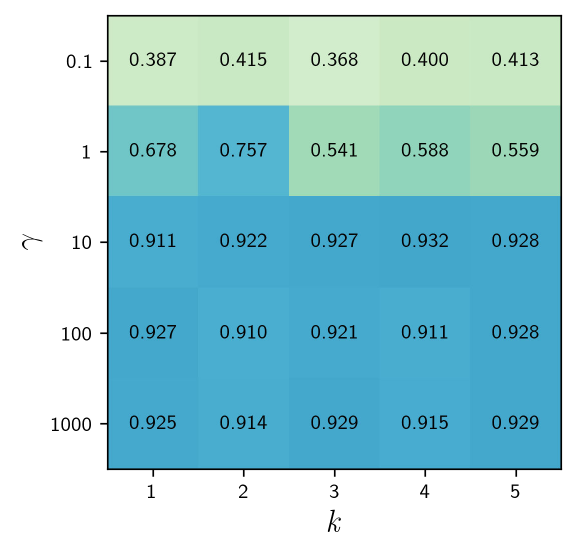}
\hspace{-5mm}
}
\subfigure[DBLP]{
\includegraphics[width=4.5cm]{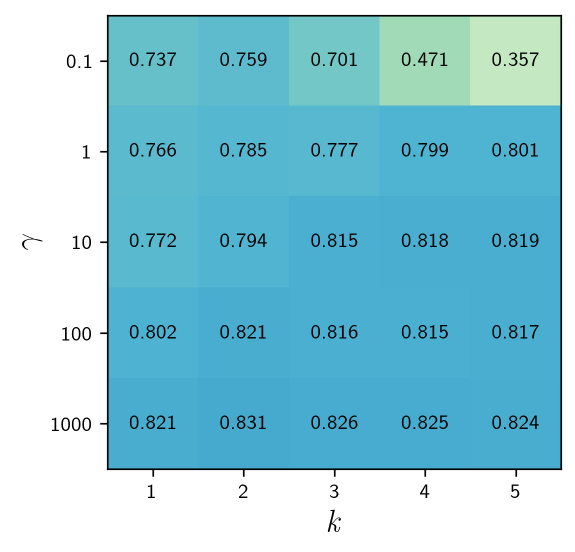}
\hspace{-5mm}
}
\subfigure[Amazon]{
\includegraphics[width=4.5cm]{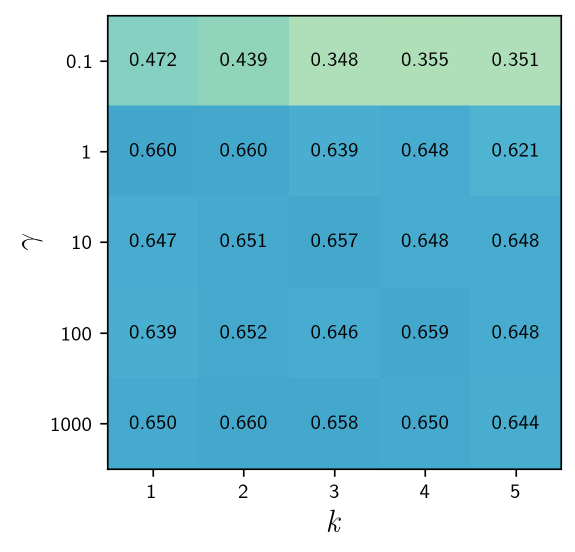}
\hspace{-5mm}
}
\subfigure[AMiner]{
\includegraphics[width=4.5cm]{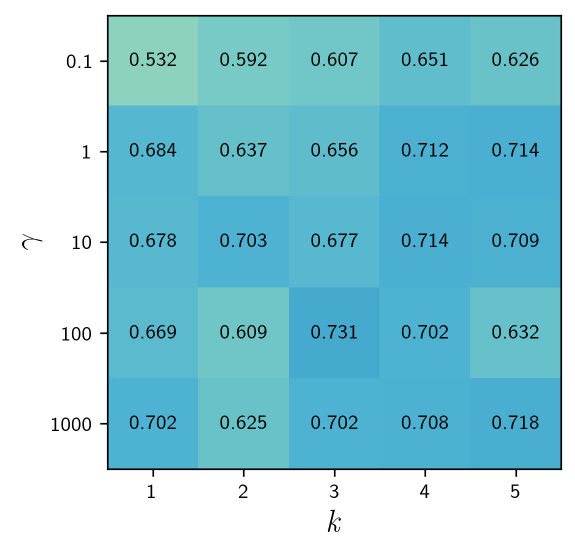}
}
\caption{The influence of $k$ and $\gamma$ on four datasets (metric: ACC).}
\label{param: filter}
\end{figure*}
\subsubsection{The Effect of Graph Filter}
To illustrate the superiority of our filter, we plot the feature decorrelation loss $\mathcal{L}_{FD}$ with respect to the training epochs and examine three variants of BTGF:
\begin{itemize}
    \item with low-pass filter $K = (I-\frac{L}{2})^2 $ instead of Eq. \ref{filter: solution2};
    \item with mix-pass filter $K = (I-\frac{L}{2})^2 + (\frac{L}{2})^2$ instead of Eq. \ref{filter: solution2};
    \item without using graph filter as preprocessing.
\end{itemize}
There are three findings:\\
\\
\textbf{Obs 1:} As shown in Fig. \ref{curves: bound}, our $\mathcal{L}_{FD}$ loss has the smallest value. According to our Propositions, the positive semi-definite constraint on the feature can restrict the upper boundary of $\mathcal{L}_{FD}$ and break through the lower boundary, reaching a lower value. The evolution of $\mathcal{L}_{FD}$ is in line with our expectations. Note that the upper and lower bounds depend on the output of the encoder, exhibiting variation among different methods and datasets, and evolving throughout the training process. It is interesting to see that our upper bound is even smaller than the lower bound on Amazon, suggesting that we can find a better solution than an input with negative semi-definite inner product. This could explain our dramatic improvement on Amazon.\\
\\
\textbf{Obs 2:} As shown in Fig. \ref{curves: barlowtwins}, our $\mathcal{L}_{FD}$ converges to smaller values compared to other filters. This indicates that our filter is more effective, which in turn improves the performance of downstream tasks. This is demonstrated in Table \ref{ablation: filter}. \\ 
\\
\textbf{Obs 3:} The absence of filters produces the worst results, which validates the importance of fusing the topology structure and attribute information. Moreover, the low-pass and mix-pass filters generate inferior performance due to their neglect of the correlation between the multi-relational graphs.

\subsection{Parameter Analysis}

This section analyzes the sensitivity of parameters in BTGF across four datasets. There are only two parameters, $k$ and $\gamma$, in the graph filter. Their influence on precision is demonstrated in Fig. \ref{param: filter}. Overall, BTGF produces reasonable performance. On the one hand, a smaller value of $\gamma$ prevents the filter from effectively utilizing the topological information of multi-relational graphs, leading to impaired clusters. On the other hand, an excessively large $\gamma$ could cause the filter to completely converge to a low-pass filter, which in turn neglects the constraint on the input. In addition, BTGF performs well for a small range of $k$, which is also convenient for practical application. 

\section{Conclusion}
In this work, we find that the graph filter can impact the performance of Barlow Twins by analyzing the conditions for the existence of lower and upper bounds of Barlow Twins loss. We prove that an input with a positive semi-definite inner product yields an upper bound, which is potentially beneficial when applying Barlow Twins. Based on this finding, we design a novel graph filter that captures the correlations between multi-relational graphs. Afterward, a simple architecture, which incorporates multi-view correlation, feature decorrelation, and graph filtering, is developed for multi-relational clustering. Comprehensive experiments on four benchmark datasets demonstrate the effectiveness of our approach. The filter design guided by loss warrants further investigation in other scenarios, e.g., GNN.

\section{Acknowledgments}
This work was supported by the National Natural Science
Foundation of China (Nos. 62276053 and 62376055).

\bibliography{aaai24}

\end{document}